\documentclass[runningheads]{llncs}
\usepackage{makeidx}
\usepackage{graphicx}
\usepackage{amsmath,amssymb,verbatim} 
\usepackage{color}
\usepackage[dvipsnames]{xcolor}
\usepackage{float}

\newtheorem{lem}{Lemma}

\newtheorem{thm}[lem]{Theorem}

\newtheorem{prop}[lem]{Proposition}
\newcommand{\R}{\mathbb{R}}
\newcommand{\ep}{\varepsilon}
\newcommand{\cof}{\mathrm{Color\_Offset}}
\newcommand{\cod}{\mathrm{Color\_difference}}
\newcommand{\mcd}{\mathrm{Max\_Color\_Dif}}
\newcommand{\man}{\mathrm{Min\_Angle}}
\newcommand{\rime}{\mathrm{RIMe}}
\newcommand{\imageheight}{25mm}

\newcommand{\imagerow}[2]{
\includegraphics[height=\imageheight]{Reconstructions/#1/#2_Mesh.jpg}
\includegraphics[height=\imageheight]{Reconstructions/#1/#2_Reconstruction_Rec.jpg}
\includegraphics[height=\imageheight]{Reconstructions/#1/#2_Zoom.jpg}
}

\newcommand{\imagefigure}[1]{
\begin{figure}
\centering

\imagerow{#1}{SLIC100} 
\bigskip

\imagerow{#1}{SLIC100_RIMe} 
\bigskip

\imagerow{#1}{SEEDS100}
\bigskip

\imagerow{#1}{SEEDS100_RIMe} 
\bigskip

\imagerow{#1}{Voronoi}
\bigskip

\imagerow{#1}{CCM} 

\caption{\textbf{Row 1:} pixel-based SEEDS.
\textbf{Row 2:} better SEEDS with RIMe.
\textbf{Row 3:} SLIC.
\textbf{Row 4:} better SLIC with RIMe.
\textbf{Row 5:} Voronoi superpixels.
\textbf{Row 6:} CCM superpixels. 
\textbf{Column 1:} superpixel meshes.
\textbf{Column 2:} reconstructions with average colors.
\textbf{Column 3:} zoomed-in parts of reconstructions by small orange rectangles.}
\label{fig:#1}
\end{figure}
}

\begin{document}
	\pagestyle{headings}
	\mainmatter

	\title{Resolution-independent meshes of superpixels}
	
\author{Vitaliy Kurlin \and
Philip Smith}
\authorrunning{V. Kurlin et al.}
%
\institute{Department of Computer Science, University of Liverpool}

	\maketitle

\begin{abstract}
The over-segmentation into superpixels is an important pre-processing step to smartly compress the input size and speed up higher level tasks. 
A superpixel was traditionally considered as a small cluster of square-based pixels that have similar color intensities and are closely located to each other. 
In this discrete model the boundaries of superpixels often have irregular zigzags consisting of horizontal or vertical edges from a given pixel grid. 
However digital images represent a continuous world, hence the following continuous model in the resolution-independent formulation can be more suitable for the reconstruction problem.
\medskip

Instead of uniting squares in a grid, a resolution-independent superpixel is defined as a polygon that has straight edges with any possible slope at subpixel resolution. 
The harder continuous version of the over-segmentation problem is to split an image into polygons and find a best (say, constant) color of each polygon so that the resulting colored mesh well approximates the given image. 
Such a mesh of polygons can be rendered at any higher resolution with all edges kept straight.
\medskip

We propose a fast conversion of any traditional superpixels into polygons and guarantees that their straight edges do not intersect. 
The meshes based on the superpixels SEEDS (Superpixels Extracted via Energy-Driven Sampling) and SLIC (Simple Linear Iterative Clustering) are compared with past meshes based on the Line Segment Detector.
The experiments on the Berkeley Segmentation Database confirm that the new superpixels have more compact shapes than pixel-based superpixels. 
\end{abstract}

\section{Introduction}
\label{sec:intro}

\subsection{Over-segmentation for low-level vision}
\label{sub:over_segmentation }

The important problem in low-level vision is to quickly detect key structures such as corners and edges where color intensities substantially change.
The over-segmentation problem is to split an image into {\em superpixels}, which are small patches of square-based pixels having similar colors and positions. 
\smallskip

\begin{figure}[h]
\begin{center}
\includegraphics[width=0.48\linewidth]{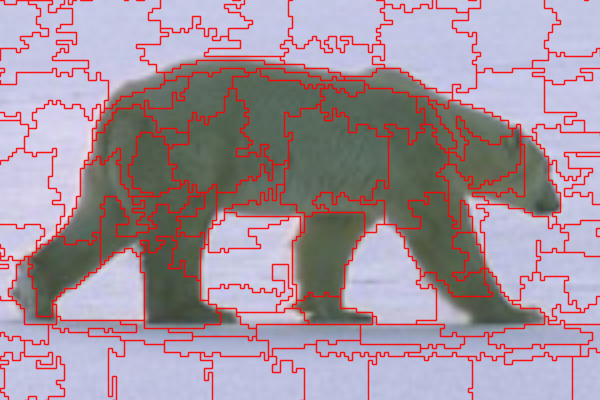}
\,
\includegraphics[width=0.48\linewidth]{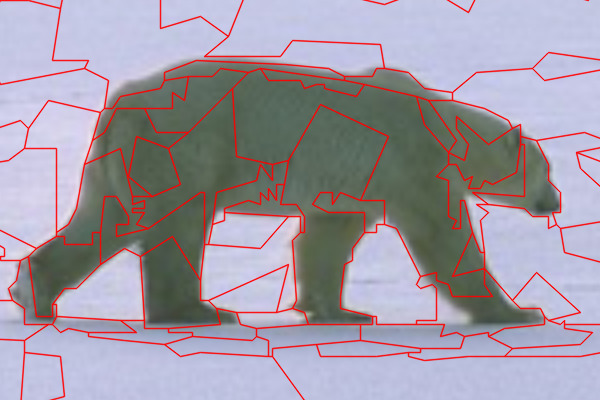}
\end{center}
\caption{SLIC superpixels (left) with zigzag boundaries of pixel-based superpixels are converted into a resolution-independent mesh (right) of polygons with straight edges that can be rendered at any higher resolution for better and smoother animations.}
\label{fig:100007}
\end{figure}

\begin{figure}[h]
\begin{center}
\includegraphics[width=0.48\textwidth]{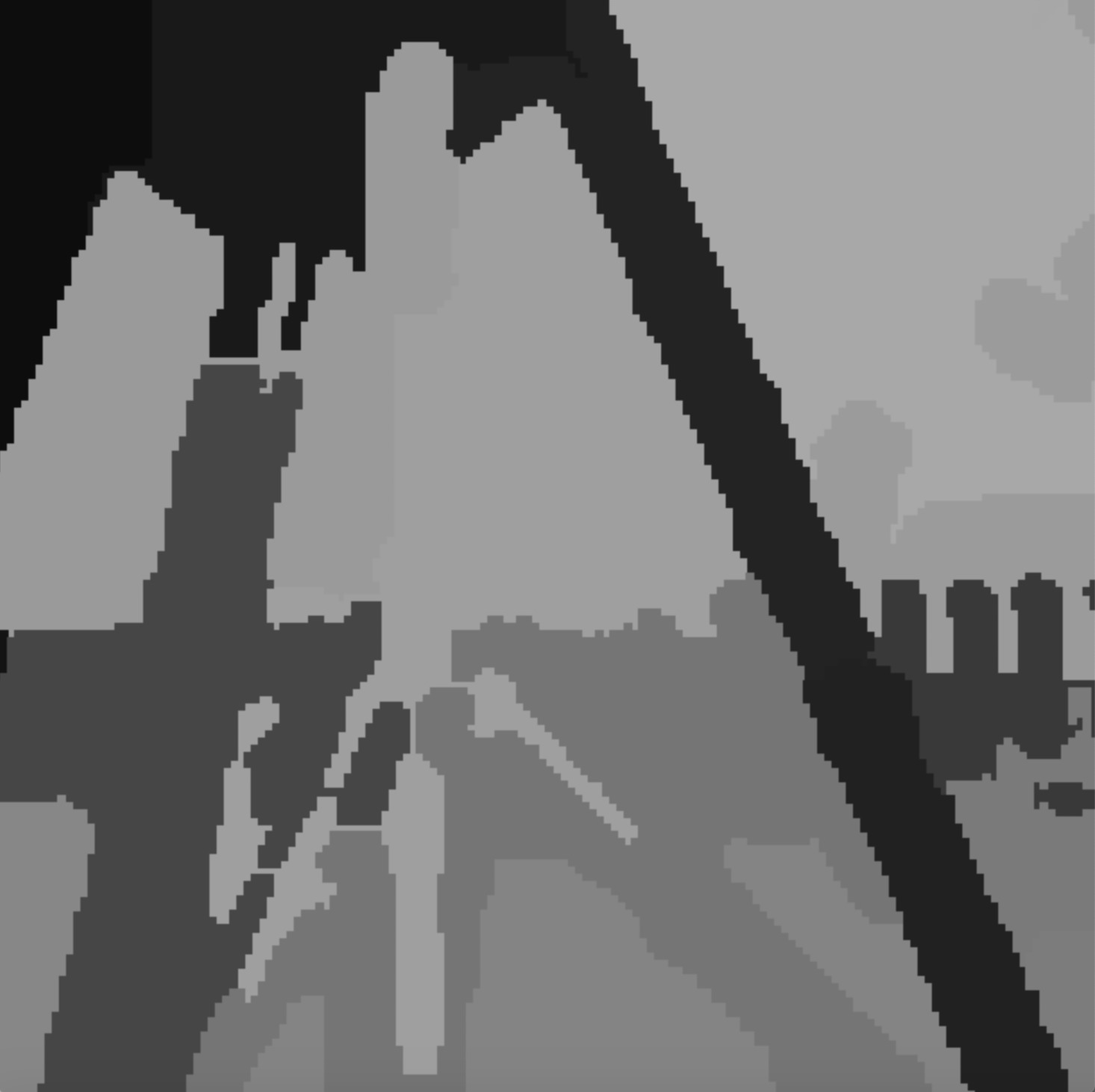}
\includegraphics[width=0.48\textwidth]{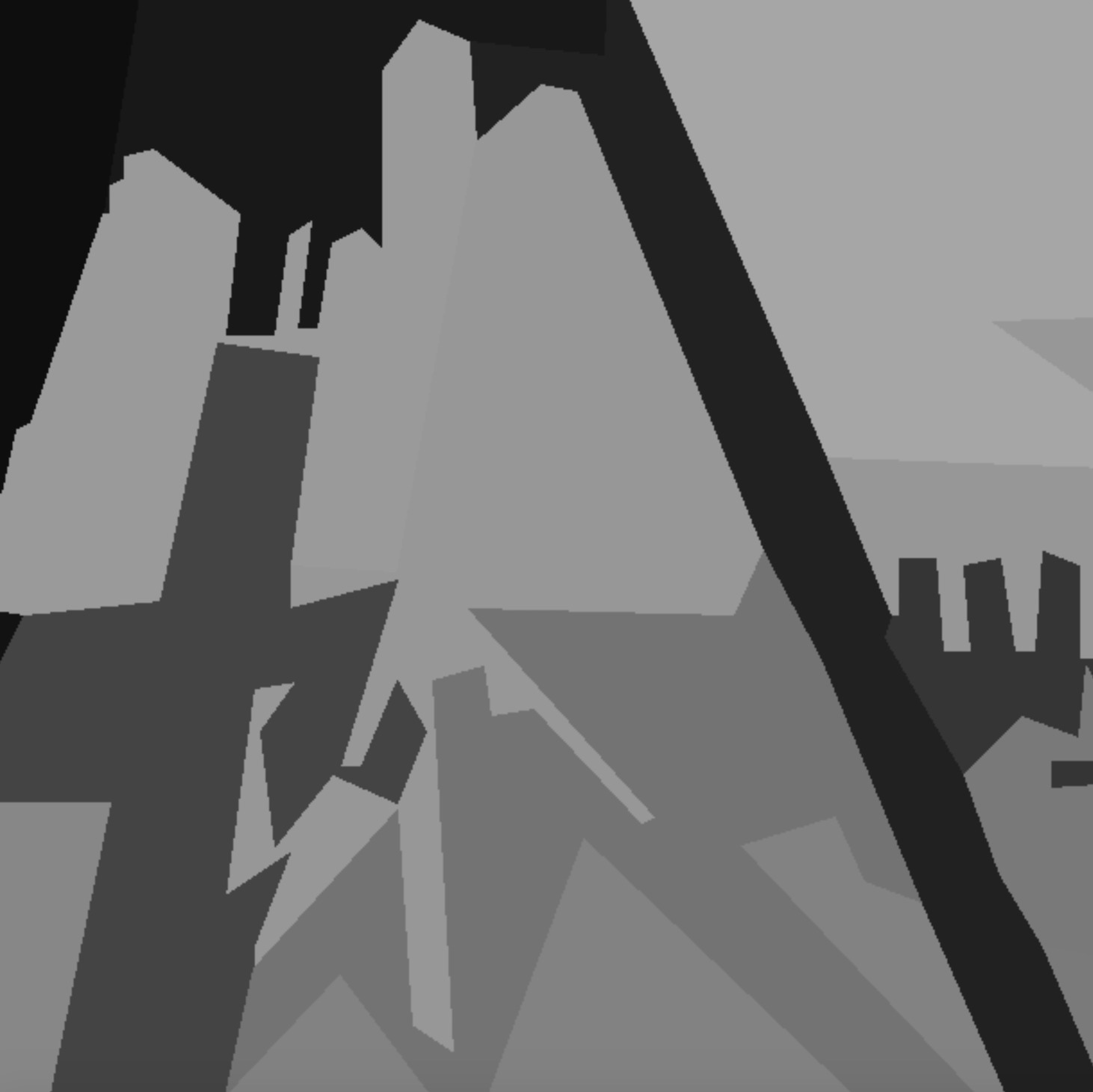}
\includegraphics[width=0.48\textwidth]{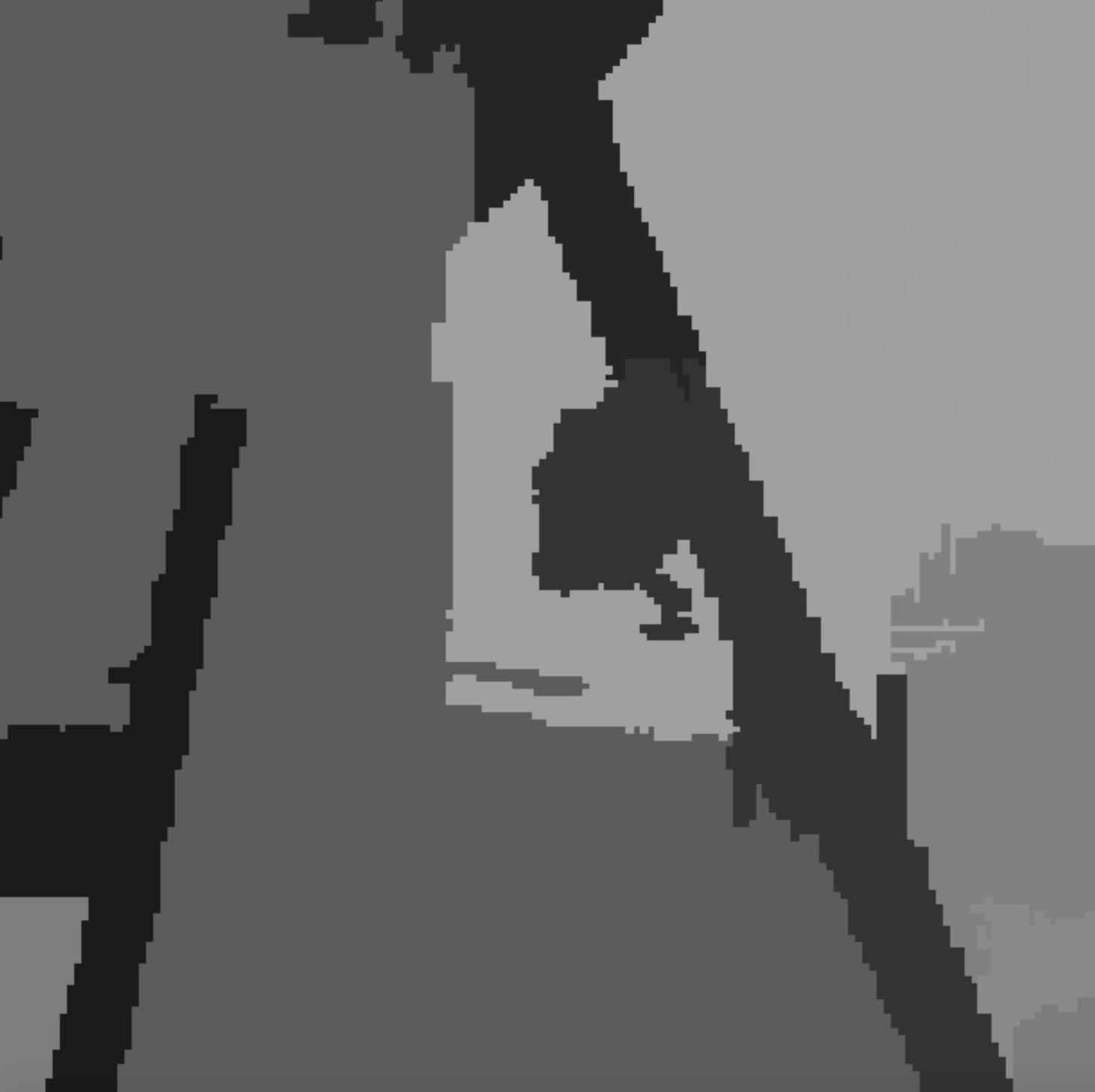}
\includegraphics[width=0.48\textwidth]{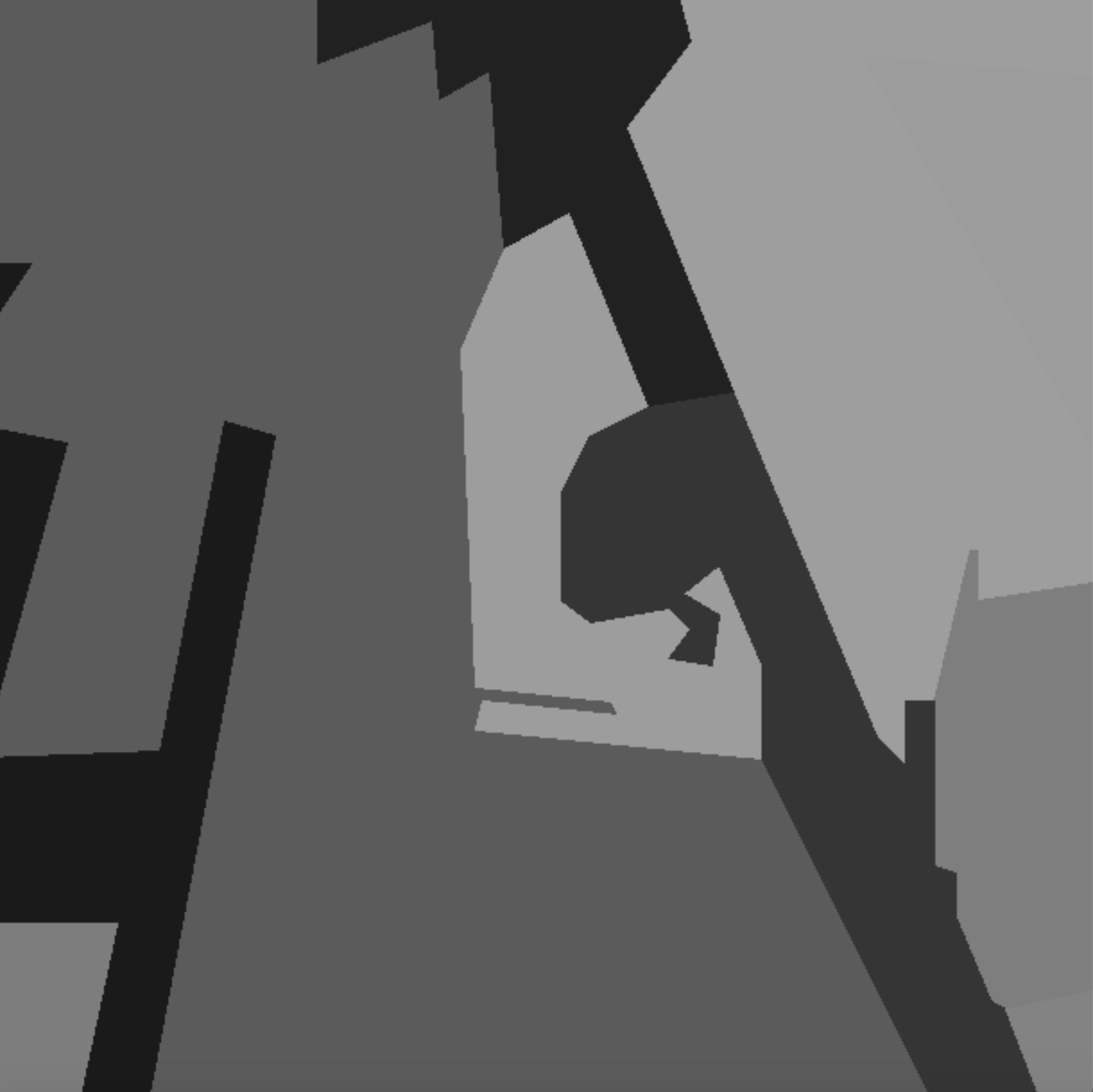}
\end{center}
\caption{The pixel-based superpixels on the left (SEEDS and SLIC) have irregular zigzag boundaries, which are straightened by the algorithm RIMe on the right.}
\label{fig:cameraman}
\end{figure}

Traditional superpixels often have irregular shapes with zigzag boundaries of only horizontal or vertical short edges.
This rigid discretization can be avoided if we allow edges of any length and direction, because continuous objects are much better represented by polygons not restricted to a given pixel grid. 
\smallskip

The color intensity in real images always changes gradually over 2-3 pixels without jumps, see \cite[Fig.~1]{VFC12}.
Hence an edge between different objects can be inside a square pixel, not along its sides.
These hurdles disappear if we look for {\em continuous objects} represented by pixel values discretely sample on a grid.
 
\subsection{Resolution-independent polygonal superpixels}
\label{sub:formulation}

Digital images represent a continuous world around us, but are restricted to a fixed pixel grid.
We consider the over-segmentation problem in the following {\em resolution-independent} formulation introduced by Viola et al. \cite{VFC12}. 
\medskip

\noindent
We split an image into a fixed number of possibly non-convex polygons so that
\smallskip

\noindent
$\bullet$
all polygons have straight edges and vertices with any real coordinates (not restricted to a given pixel grid, so independent of an initial image resolution);
\smallskip

\noindent
$\bullet$
the resulting colored mesh (with a best constant color over each polygon) approximates the original image, e.g. by minimizing an energy in section~\ref{sec:experiments}. 
\medskip

Such a polygonal mesh can be rendered at any higher resolution and is called {\em resolution-independent}.
In general, a {\em mesh} for a rectangular image $I\subset\R^2$ is a graph $G\subset\R^2$ that contains the boundary of $I$ and consists of non-intersecting line segments that split $I$ into possibly non-convex polygons.
Fig.~\ref{fig:cameraman} and~18 show long thin superpixels for the tripod legs in the famous cameraman image.

\subsection{Key contributions to the state-of-the-art for superpixels}
\label{sub:contributions}

\noindent
$\bullet$
We solve the over-segmentation problem for polygonal {\em resolution-independent} superpixels that have few straight edges with infinitely many possible slopes.
\smallskip

\noindent
$\bullet$
The algorithm $\rime$ in section~\ref{sec:algorithm} can convert any pixel-based superpixels into a resolution-independent mesh with quality guarantees in Theorem~\ref{thm:guarantees}.
\smallskip

\noindent
$\bullet$
The experiments in section~\ref{sec:experiments} confirm that the resolution-independent meshes based on SEEDS and SLIC superpixels, achieve better results on objective measures, perform similarly to SEEDS and SLIC on the BSD benchmarks.
\smallskip

\noindent
$\bullet$
$\rime$ beats all other resolution-independent superpixels on the objective reconstruction error and benchmarks of the Berkeley Segmentation Database \cite{BSD}.

\section{Review of the past work on superpixels}
\label{sec:review}

We review the widely used algorithms for pixel-based and resolution-independent superpixels for the harder reconstruction problem of continuous real-life objects.

\subsection{Pixel-based superpixel algorithms}
\label{sub:traditional}

The first successful algorithms were based on the graph of the 4-connected pixel grid  \cite{SM00,FH04,LSKDS09}.
The {\em Lattice Cut} algorithm by Moore et al. \cite{MPW10} guarantees that the final mesh of superpixels is regular as the original grid of pixels.
The best quality in this category is achieved by Entropy Rate Superpixels (ERS) of Lie et al. \cite{LTRC11} minimizing the entropy rate of a random walk on a graph.
Based on {\em Compact Superpixels} by Veksler and Boykov \cite{VBM10}, the fastest algorithm is by Zhang et al. \cite{ZHMB11} processing an average image from BSD500 in 0.5 sec.
Our experiments will use the algorithms SEEDS, SLIC from OpenCV, VLFeat libraries.
\smallskip

The {\em Simple Linear Iterative Clustering} (SLIC) algorithm by Achanta et al. \cite{ASSLFS12} forms superpixels by $k$-means clustering in a 5-dimensional space using 3 colors and 2 coordinates per pixel. 
Because the search is restricted to a neighborhood of a given size, the complexity is $O(kmn)$, where $n$ and $m$ are the numbers of pixels and iterations.
The later  {\em Linear Spectral Clustering} (LSC) by Li et al. \cite{LC15} is based on a weighted $k$-means clustering in a 10-dimensional space. 
The SMURF algorithm by Luengo et al \cite{LBF16} obtains superpixels in a parallelized way within larger super-regions and alternates split/merge steps at several levels.
\smallskip

SEEDS (Superpixels Extracted via Energy-Driven Sampling) by Van den Bergh et al. \cite{BBRG15} seems the first superpixel algorithm to use a {\em coarse-to-fine optimization} that progressively refines superpixels.
At the initial coarse level, each superpixel consists of large rectangular blocks of pixels.
At the next level, all blocks are subdivided into four rectangles and any boundary block can move to an adjacent superpixel, an so on until all blocks become pixels.
SEEDS puts the colors of all pixels within each fixed superpixel are put in bins (five for each color channel) and iteratively maximizes the sum of deviations of all bins from an average bin within every superpixel.
The color deviation is maximal for a superpixel whose pixels have colors in one bin.
\smallskip

The similar Coarse-to-Fine (CtF) algorithm by Yao et al. \cite{YBFU15} minimizes the discrete Reconstruction Error from section~\ref{sec:experiments}.
The recent improvement of this Coarse-to-Fine approach \cite{KH17EMM} allows the user to make shapes of superpixels more round by giving more weight to an isoperimetric quotient. 

\subsection{Edge detection at subpixel resolution}
\label{sub:edge-detection}

Both past algorithms for resolution-independent superpixels (Voronoi and CCM: Convex Constrained Meshes) are based on the Line Segment Detector algorithm (LSDA), which outputs line segments at subpixel resolution \cite{LSD}.
The parameters are a tolerance $\tau$ for angles between gradients and a threshold $\epsilon$ for false alarms.
For the values $\tau=22.5^{\circ}$, $\epsilon=1$ and the random model of a uniformly distributed gradient field,  the LSDA outputs on average at most one false positive.
\smallskip

The LSDA edges have endpoints and gradients with any real coordinates, but the use has no control over a number of line segments in the output.
The recent persistence-based line segment detector \cite{kurlin2019persistence} guarantees edges without intersections and small angles.

\subsection{Resolution-independent polygonal superpixels}
\label{sub:resolution-independent}

The {\em Voronoi superpixels} by Duan and Lafarge \cite{DL15} split an image into polygons not restricted to a pixel grid.
For points $p_1,\dots,p_k$ (called {\em centers}), the {\em Voronoi cell} $V(p_i)$ is the polygonal neighborhood of the center $p_i$ consisting of all points closer (in the Euclidean distance) to $p_i$ than to other centers.
These centers are chosen on both sides of each LSDA edge so that all LSDA edges are covered by the boundaries of Voronoi cells, though no guarantees were proved.
The main advantage of Voronoi superpixels is their almost "round" shape, see section~\ref{sec:experiments}.
\smallskip

The CCM superpixels (Convex Constrained Meshes) by Forsythe et al. \cite{FK17JEI} directly include LSDA edges as hard constraints, which improves the Boundary Recall, see section~\ref{sec:experiments}.
After post-processing LSDA edges to get a straight line graph without self-intersections, this graph is converted into a full mesh of convex polygons that are guaranteed to have no angles smaller than $20^{\circ}$.
The boundaries of CCM superpixels are always in a small offset of LSDA edges.
\smallskip

Both Voronoi and CCM superpixels crucially depend on the quality of LSDA, which may not output a desired number of strongest edges.
Pixel-based superpixels are better optimised for the Boundary Recall.
The goal of the paper is to transfer this advantage to new resolution-independent superpixels.  

\section{RIMe: a resolution-independent mesh of polygons}
\label{sec:algorithm}

\subsection{RIMe algorithm: pipeline, input, parameters and output}
\label{sub:input-output}

The algorithm RIMe converts any pixel-based superpixels into a mesh of polygons whose straight edges approximate boundaries with guarantees in Theorem~\ref{thm:guarantees}.
The {\em input} is the matrix $s(p)$ of superpixel indices for every pixel $p=(i,j)$.

\begin{figure}[h]
	\begin{center}
		\includegraphics[width=\linewidth]{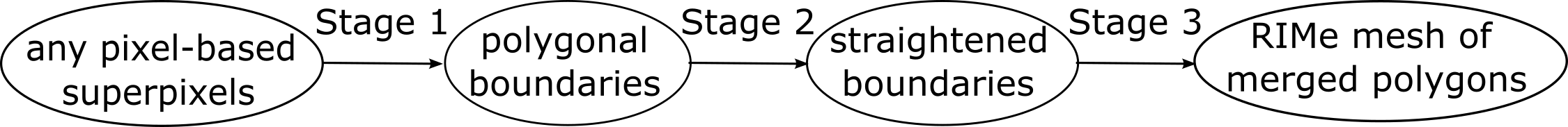}
	\end{center}
	\caption{Pipeline of the RIMe algorithm through 3 stages in subsection~\ref{sub:step-by-step}.}
	\label{fig:pipeline}
\end{figure}

$\rime$ essentially uses the OpenMesh library and saves the final mesh to a .off file.
One can find the most optimal color for every polygon and output the colored mesh as a small representation for further processing, see section~\ref{sec:experiments}. 

\subsection{Conversion algorithm RIMe step-by-step}
\label{sub:step-by-step}

\noindent
{\bf Stage~1}:
Convert any traditional superpixels into a proper pixel-based mesh
by extracting polygonal boundaries of any given pixel-based superpixels.
All past algorithms output a matrix $s(p)$ of superpixel labels (integer indices) over all pixels $p$.
Unfortunately, the union of pixels having the same label $s$ is often disconnected (as a subset in the 4-connected or 8-connected grid).
Even if connected, one superpixel can be surrounded by another superpixel (in this case the surrounding superpixel can be better split into two smaller superpixels).
We convert any output into a mesh whose polygons are connected unions of pixels.
\smallskip

\noindent
{\em Step 1.1}.
Starting from an initial position at a corner of an image, we follow the boundary of a current polygon within the pixel grid and check at every pixel corner whether we should turn left/right or go forward as shown in Fig~\ref{fig:turn-rules}.
\smallskip

\begin{figure}[h]
\begin{center}
\includegraphics[width=\linewidth]{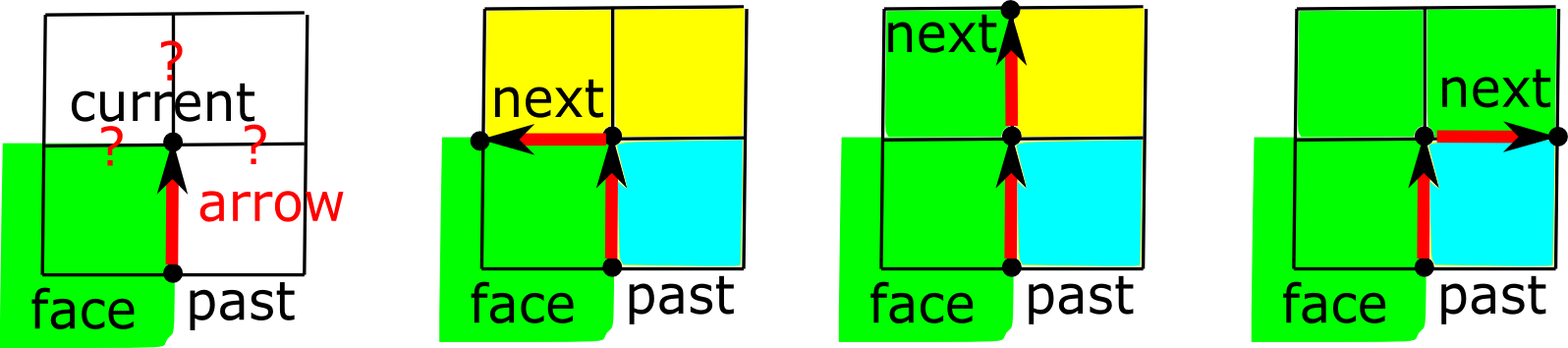}
\end{center}
\caption{Choosing the next point at a corner (colors indicate different superpixels).}
\label{fig:turn-rules}
\end{figure}

\noindent
{\em Step 1.2}.
When we meet a new superpixel, we save an initial arrow (a directed edge between pixels) to make sure that we  later go around the new superpixel.
\smallskip

\noindent
{\em Step 1.3}.
After we have returned to the initial position, a closed boundary of a superpixel was traced and we check if there are any unexplored superpixels.
\smallskip

\noindent
{\em Step 1.4}.
Check that all found superpixels have the expected areas, otherwise we find more connected components by looking at unexplored boundary pixels from the matrix of superpixel labels, so the number of superpixels can increase.
\medskip

\noindent
{\em Step 1.5}.
If a found polygon still does not have the expected area, it must surround another polygon.
We add a straight edge (say, $D_1$) between two closest vertices on the boundaries of the superpixels to split the surrounding superpixel into two.
This edge $D_1$ cannot intersect another edge (say, $D_2$), which would contradict Lemma~\ref{lem:longdiag} for the quadrangle with the intersecting diagonals $D_1, D_2$.
\medskip

\begin{lem}
\label{lem:longdiag}
In any convex quadrilateral with all sides longer than the shortest diagonal, the longest diagonal is longer than any side.
\end{lem}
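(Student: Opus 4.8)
The plan is to work with the point $P$ where the two diagonals of the convex quadrilateral meet. Since the quadrilateral is convex and non-degenerate, $P$ lies strictly in its interior and splits each diagonal into two pieces of positive length; write $a=|PA|$, $b=|PB|$, $c=|PC|$, $d=|PD|$, so that the two diagonals have lengths $a+c$ and $b+d$. The four corners of the quadrilateral give genuine triangles $APB$, $BPC$, $CPD$, $DPA$ (genuine because $P$ lies on no side), so the triangle inequality yields the strict bounds $|AB|<a+b$, $|BC|<b+c$, $|CD|<c+d$, $|DA|<d+a$. These are the only geometric facts the proof needs.

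Next I would invoke the hypothesis. After relabelling the vertices by the cyclic shift $A\mapsto B\mapsto C\mapsto D\mapsto A$ if necessary (this swaps the two diagonals and merely permutes the set of sides), I may assume the shorter diagonal is $AC$, of length $a+c$; the hypothesis then says every side exceeds $a+c$. Combining each of these with the matching triangle bound above gives, one at a time, $b>c$ (from $a+c<|AB|<a+b$), $b>a$ (from $a+c<|BC|<b+c$), $d>a$ (from $a+c<|CD|<c+d$), and $d>c$ (from $a+c<|DA|<d+a$). Thus both spokes $b,d$ of the diagonal $BD$ strictly exceed both spokes $a,c$ of $AC$; in particular $b+d>a+c$, so $BD$ is indeed the longer diagonal, and the borderline case of equal diagonals simply cannot arise under the hypothesis.

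Finally I would feed these comparisons back into the triangle bounds: $|AB|<a+b<d+b=b+d$ (using $d>a$), $|BC|<b+c<b+d$ (using $d>c$), $|CD|<c+d<b+d$ (using $b>c$), and $|DA|<a+d<b+d$ (using $b>a$). Hence every side is strictly shorter than $b+d=|BD|$, the longer diagonal, which is exactly the assertion.

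The proof is short, and the only thing that takes care is bookkeeping: pairing each side with the right corner triangle and with the right spoke inequality, and checking that the reduction to ``$AC$ is the shorter diagonal'' is harmless (it is, because the cyclic relabelling preserves the multiset of side lengths while interchanging the two diagonals). I expect no genuine obstacle: the single idea is to look at the intersection of the diagonals, after which the statement falls out from two rounds of the triangle inequality chained through the hypothesis.
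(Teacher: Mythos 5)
Your proof is correct and rests on the same mechanism as the paper's: the triangle inequality applied in the four corner triangles cut off by the point where the diagonals meet. The paper packages this as the aggregate fact that the sum of the diagonals exceeds the sum of either pair of opposite sides and then subtracts the short diagonal, whereas you unpack it into the individual spoke comparisons $\min\{b,d\}>\max\{a,c\}$; the content is the same, and your version has the minor bonus of making explicit that $BD$ really is the longer diagonal and that all the inequalities are strict.
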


\begin{proof}
Triangle inequality implies that the sum of the two diagonals is greater than the sum of either pair of opposite sides. 
Assuming that the shortest diagonal is shorter than any side, the longest diagonal should be longer than any side.
\hfill$\square$
\end{proof}

\noindent
{\bf Stage~2}: 
Straighten all boundaries of a mesh from Stage~1.
The key advantage of $\rime$ is the possibility to {\em run Stage~2 in parallel} for different pairs of polygons.
\smallskip

\noindent
{\em Step 2.1}: 
Find the edge chain (a sequence of successive non-boundary edges) between any two adjacent polygons.
\smallskip

\noindent
{\em Step 2.2}: 
Given a chain between vertices $A,B$, find the maximum Euclidean distance $d_C$ from an intermediate vertex $C$ to the straight line $AB$, see Fig.~\ref{fig:straighten-chain}.
\smallskip

\begin{figure}[h]
\begin{center}
\includegraphics[width=\linewidth]{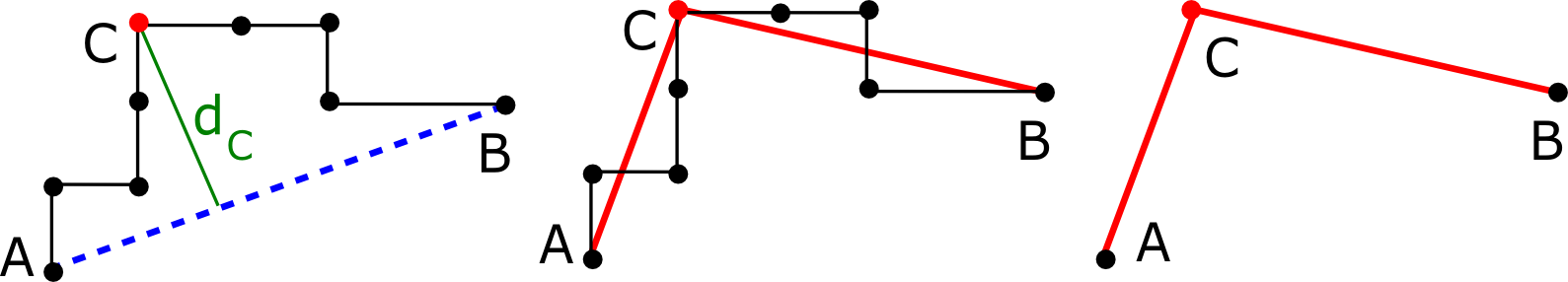}
\end{center}
\caption{Straightening polygonal chains of edges recursively in Step 2.3 when $d_C$ is small.}
\label{fig:straighten-chain}
\end{figure}

\noindent
{\em Step 2.3}: 
We replace the whole chain by the segment $AB$ if
$$\mbox{the max distance }d_C<1\mbox{ or }d_C<\dfrac{\cof}{\cod},\leqno(1)$$
where $\cod$ is the absolute difference between {\em average intensities} (estimated from pixel-based superpixels in Stage 1) of the polygons $F_1,F_2$ on both sides of the chain.
\smallskip

\noindent
{\em Step 2.4}:
For $d_C\geq 1$, we check that the new potential edge $AB$ does not intersect any existing edges of the polygons $F_1,F_2$ sharing the chain. 
We check if $AB$ has angles more than $\man$ all incident edges in the current mesh. 
\smallskip

\noindent
{\em Step 2.5}:
If any condition in Steps~2.3-2.4 fails, we recursively straighten the subchains $AC,CB$ as above. 
\medskip

\noindent
{\bf Stage~3}: 
Merge adjacent polygons whose average intensities differ by less than $\mcd$.
Many images have uniform backgrounds and superpixels consisting of pixels with the same intensity.
Hence merging the resulting polygonal superpixels keeps important edges from a given image.

\subsection{Theoretical guarantees (proved in appendices)}
\label{sub:guarantees}

We assume that pixel-based superpixels are given with their average intensities, otherwise these averages can be quickly computed for Step~2.3.
Apart from these averages, the algorithm $\rime$ accesses only a smaller number $m$ of {\em boundary pixels} that have at least one neighboring pixel from a different superpixel.

\begin{prop}
\label{prop:complexity}
The algorithm $\rime$ in section~\ref{sec:algorithm} has the {\em linear running time} in the number $m$ of boundary pixels that are not strictly inside one superpixel. 
\end{prop}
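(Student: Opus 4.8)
The plan is to bound the cost of the three stages of $\rime$ separately, after first reducing every relevant combinatorial quantity to $O(m)$. Since every pixel-based superpixel contains at least one boundary pixel, the number of polygons produced by Stage~1 is at most $m$. Every boundary grid-edge separates two pixels and only a bounded number of superpixels meet at any grid vertex, so the total number of grid-edges traced over all polygon boundaries --- and hence the total number of vertices of all the edge chains handled in Stage~2, as well as the number of edges of the face-adjacency graph used in Stage~3 --- is $O(m)$. All lookup tables (superpixel labels restricted to boundary pixels, the stored ``initial arrows'' of Step~1.2, the per-face data) then have size $O(m)$; the average intensities are supplied by assumption, or otherwise precomputed in a single separate pass whose $O(n)$ cost for an $n$-pixel image does not affect the $m$-dependence.

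For \textbf{Stage~1} I would argue that the boundary walk of Step~1.1 enters each directed grid-edge at most once and makes an $O(1)$ turn decision at each corner by the rules of Fig.~\ref{fig:turn-rules}; the new-superpixel bookkeeping of Steps~1.2--1.3 and the area checks of Step~1.4 are $O(1)$ per traced edge using arrays indexed by superpixel label. Step~1.5 fires at most once per nested superpixel, i.e. $O(m)$ times, and its nearest-vertex search is confined to the two boundaries being joined, whose lengths are already counted in the $O(m)$ budget; crucially Lemma~\ref{lem:longdiag} guarantees that the inserted diagonal $D_1$ introduces no crossing, so no backtracking or re-tracing is ever needed. Hence Stage~1 is $O(m)$. \textbf{Stage~3} is the easiest: deciding which adjacent polygons to merge (average intensities closer than $\mcd$) and performing the merges is one pass over the $O(m)$ edges of the face-adjacency graph together with a union-find structure on its $O(m)$ faces, costing $O(m\,\alpha(m))$, which I would absorb into $O(m)$ (or keep the inverse-Ackermann factor explicitly).

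The main obstacle is \textbf{Stage~2}, because the recursive chord straightening of Steps~2.2--2.5 is of Douglas--Peucker type: on a chain of $\ell$ grid-edges the recursion tree has $O(\ell)$ nodes (each internal node is a split vertex $C$ retained in the final mesh, each leaf is a single grid-edge or a straightened subchain, so there are at most $\ell$ leaves), but the farthest-vertex computation of Step~2.2 costs time proportional to the current subchain length, and on an unbalanced tree this naively sums to $\Theta(\ell^2)$. To close the argument I would (i) note that the intersection and minimum-angle tests of Step~2.4 are charged to the incident edges of $F_1,F_2$ and therefore cost $O(\ell)$ per chain in total, and (ii) supply a charging argument showing that each chain vertex --- equivalently, up to a bounded factor, each boundary pixel --- is inspected only $O(1)$ times across the whole recursion, so that the farthest-vertex scans also sum to $O(\ell)$ per chain and to $O(m)$ overall; summing the three stages then yields the claimed linear bound. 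Step (ii) is where I expect the real work to lie, since it needs more than the abstract recursion-tree bound. A safe fallback, if a clean $O(1)$ charge cannot be established, is to strengthen the acceptance test in~(1) so that retained subchains are geometrically well separated --- which forces splits that are balanced enough for an $O(\ell\log\ell)$-per-chain bound and an overall $O(m\log m)$ running time --- or to replace the recursion by a known near-linear path-hull variant, with the axis-aligned staircase nature of pixel boundaries removing the logarithm in practice.
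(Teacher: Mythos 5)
Your treatment of Stages 1 and 3 is essentially sound, but the argument does not close, and you have correctly located where it breaks: Stage 2. The recursive straightening of Steps 2.2--2.5 is a Douglas--Peucker recursion, and the charge you hope to establish in your item (ii) --- that each chain vertex is inspected $O(1)$ times over the whole recursion --- is false for that recursion in general. When the farthest vertex $C$ splits a chain of length $\ell$ into pieces of lengths $1$ and $\ell-1$ at every level, a vertex near the far end is rescanned at $\Theta(\ell)$ recursion nodes, the farthest-vertex work on that single chain is $\Theta(\ell^2)$, and summing over chains gives $\Theta(m^2)$ in the worst case. Nothing in the acceptance test (1) or in the staircase geometry of pixel boundaries rules this out: a long monotone staircase drifting slowly away from its chord produces exactly such unbalanced splits. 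Your fallbacks are honest but do not prove the stated proposition --- strengthening (1) or switching to a path-hull variant changes the algorithm, and the balanced-split route only yields $O(m\log m)$. As written, your proposal establishes an $O(m^2)$ worst-case bound (or $O(m\log m)$ under an unproved balance assumption), not linear time; to reach $O(m)$ for Stage 2 you would need either an amortized argument specific to grid staircases or an explicit cap on the recursion depth, and you supply neither.

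A second, smaller gap is in Step 1.5: finding the two closest vertices on the boundaries of a surrounding polygon and the polygon it encloses is a closest-pair computation between vertex sets of sizes $a$ and $b$, which costs $O(ab)$ naively, not $O(a+b)$; your remark that these lengths are ``already counted in the $O(m)$ budget'' bounds $a+b$ but not $a\cdot b$, so nested superpixels could in principle push Stage 1 past linear as well. The paper itself defers the proof of this proposition to appendices that are not included in the text, so no direct comparison is possible; but on its own terms your write-up proves linearity only for Stage 3 and for Stage 1 modulo Step 1.5, and leaves the decisive Stage 2 bound as an acknowledged conjecture rather than a theorem.
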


Let $d$ be the Euclidean distance between points in $\R^2$. 
The {\em $r$-offset} (dilation with a disk of radius $r$) of any $S\subset\R^2$ is its thickened neighborhood $\{p\in\R^2 \;:\; d(p,S)\leq r\}$, where $d(p,S)$ $=\min\{d(p,q) \; : \; q\in S\}$ is the distance from $p$ to $S$.

\begin{thm}
\label{thm:guarantees}
The inequality $d_C<1$ in (1) implies that 
\smallskip

\noindent
(\ref{thm:guarantees}a)
for any pixel-based superpixels, the edges of a resolution-independent mesh can meet only at endpoints.
\smallskip

\noindent
Moreover, the conversion algorithm $\rime$ guarantees that 
\smallskip

\noindent
(\ref{thm:guarantees}b)
all angles between incident edges in a final resolution-independent mesh are at least $\man$,
\smallskip

\noindent
(\ref{thm:guarantees}c)
any chain of edges between original superpixels with $\cod$ is replaced by a polygonal line within the $\dfrac{\cof}{\cod}$-offset of the original chain.
\end{thm}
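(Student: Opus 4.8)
The three parts are essentially independent; each one turns on a single geometric property of the straightening operation in Step~2.3. For (\ref{thm:guarantees}a) I would argue by a loop invariant: the mesh stays a plane graph (edges meet only at shared endpoints). This holds after Stage~1 — the splitting edges $D_1$ of Step~1.5 are kept non-crossing by Lemma~\ref{lem:longdiag} — so it suffices to show that one straightening preserves it. Suppose a chain $\gamma$ between faces $F_1,F_2$ with endpoints $A,B$ is replaced by the segment $AB$ with $d_C<1$. First I would check that $\gamma$ is monotone with respect to the direction of $AB$: an axis-parallel chain that backtracks in that direction must leave the strip of half-width $<1$ about the line $AB$, so since $d_C<1$ no backtracking occurs and the region $R$ cut off between $\gamma$ and $AB$ is a single thin lens. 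The crux is then that $R$ contains no mesh vertex in its interior. Mesh vertices are lattice points, and a monotone grid staircase pinned at the two lattice points $A,B\in AB$ that never leaves the strip of half-width $<1$ about $AB$ is so tightly squeezed that every lattice point in the region it cuts off actually lies on the staircase itself; hence no lattice point is strictly inside $R$. Since no current edge crosses $\gamma$ (invariant) and none has an endpoint strictly inside $R$, and two straight segments meet at most once, no current edge can cross $AB$, so the replacement keeps the mesh planar. Acceptances with $d_C\ge 1$ are handled separately by the explicit intersection test in Step~2.4, and parallel straightening of non-adjacent polygon pairs is safe because the change made for one chain is confined to a neighbourhood of that chain.

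For (\ref{thm:guarantees}b), the angle bound is enforced by the explicit test in Step~2.4 on every accepted straightening edge; for an edge accepted with $d_C<1$ one uses that the small perpendicular deviation confines the direction of $AB$ to a controlled cone around the first and last grid edges of the removed chain, keeping its angles with the edges at $A$ and $B$ above $\man$. It then remains to check that Stage~3 creates no smaller angle: when two faces are merged across a shared edge $uv$, either a degree-$2$ vertex is deleted (no constraint there), or at a surviving vertex the two edges that flanked $uv$ remain, and the angle between them on the merged-face side is the sum of two old angles, each $\ge\man$, while on the other side it is an old angle, already $\ge\man$. Hence every angle in the output is $\ge\man$.

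For (\ref{thm:guarantees}c), the recursion in Step~2.5 terminates because each split replaces a chain by two chains with strictly fewer edges, so every final subchain is a single edge with $d_C=0$ and the base case of (1) applies. Thus the final polygonal line between $F_1,F_2$ is a concatenation of segments $A_iB_i$, each accepted with $d_C<\cof/\cod$ — the branch of (1) that stops the recursion on a boundary of colour difference $\cod$ (for the natural parameter choices $\cof/\cod\ge 1$, so the $d_C<1$ branch needed in (\ref{thm:guarantees}a) is subsumed here). For each such segment a short separation argument shows that every $p\in A_iB_i$ lies within distance $d_C$ of the subchain $\gamma_i$: the subchain joins $A_i,B_i\in A_iB_i$ while lying in the strip of half-width $d_C$ about the line $A_iB_i$, and the closed disk of radius $d_C$ about $p$ lies in that strip and would separate $A_i$ from $B_i$ inside it were it disjoint from $\gamma_i$. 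Hence $A_iB_i$ lies in the $\frac{\cof}{\cod}$-offset of $\gamma_i$, and since each $\gamma_i$ is a sub-arc of the original chain, the whole polygonal line lies in the $\frac{\cof}{\cod}$-offset of that chain.

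I expect the main obstacle to be the discrete-geometry statement behind (\ref{thm:guarantees}a): ruling out backtracking chains with $d_C<1$ and proving that the thin lens between a monotone grid staircase and a line of arbitrary integer direction encloses no lattice point in its interior. Everything else is bookkeeping around the explicit checks in Steps~2.4--2.5 and the merging in Stage~3.
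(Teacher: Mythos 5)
Your parts (b) and (c) track what the algorithm explicitly enforces and are broadly sound: the angle bound in (\ref{thm:guarantees}b) does come from the check in Step~2.4 plus the observation that merging in Stage~3 only sums or preserves existing angles, and your separation argument for (\ref{thm:guarantees}c) (a point $p$ of $A_iB_i$ strictly between the endpoints forces the chain to cross the perpendicular through $p$ inside the strip of half-width $d_C$, so $d(p,\gamma_i)\le d_C$) is correct and is the right way to get the offset bound — modulo the caveat you already flag, that when $\cof/\cod<1$ the branch $d_C<1$ of (1) only yields a $1$-offset, and modulo the fact that Step~2.4's angle test is stated only for $d_C\ge 1$, so the $d_C<1$ acceptances still need the separate angle argument you only gesture at. Note that the paper itself defers all of this to appendices that are not present in the source, so the only in-text material to compare against is Lemma~\ref{lem:longdiag} and the algorithm description.

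The genuine gap is in (\ref{thm:guarantees}a), and it sits exactly where you admit ``the main obstacle'' lies: both supporting claims are unproved, and the first is false. A grid chain can backtrack in the direction of $AB$ without leaving the strip of half-width $<1$: take $A=(0,0)$, $B=(2,1)$ and the chain $(0,0)\to(0,1)\to(1,1)\to(1,0)\to(2,0)\to(2,1)$; the distance from $(x,y)$ to the line $AB$ is $|x-2y|/\sqrt{5}$, so every intermediate vertex is at distance at most $2/\sqrt{5}<1$, yet the step $(1,1)\to(1,0)$ has negative projection on $AB$ and the chain crosses the segment $AB$ at $(1,\tfrac12)$, so the cut-off region is not a single monotone lens. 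With monotonicity gone, the statement that the region between $\gamma$ and $AB$ contains no lattice point in its interior — which is the entire content of (\ref{thm:guarantees}a) — is simply asserted, not proved; a correct route would more likely combine your part-(c) separation argument (every point of $AB$ is within $d_C<1$ of $\gamma$) with the lattice fact that no grid point lies at distance $<1$ from a unit grid edge other than its own endpoints, and then still has to exclude edges that merely pass near $\gamma$ without having a vertex near it. Finally, even granting the lens picture, your exhaustion (``no crossing of $\gamma$, no endpoint inside $R$, at most one crossing of $AB$'') does not cover an edge incident to $A$ or $B$ whose initial direction points into $R$: such an edge enters $R$ through a shared endpoint rather than through $\gamma$ or $AB$, and could exit through $AB$ with a single transversal crossing. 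Until these three points are settled, (\ref{thm:guarantees}a) is not established.
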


The value $\cof=30$ means that any chain between polygons whose average intensities differ by 10 is straightened within a 3-pixel neighborhood.

\begin{figure*}[t]
\begin{center}
\includegraphics[height=55mm]{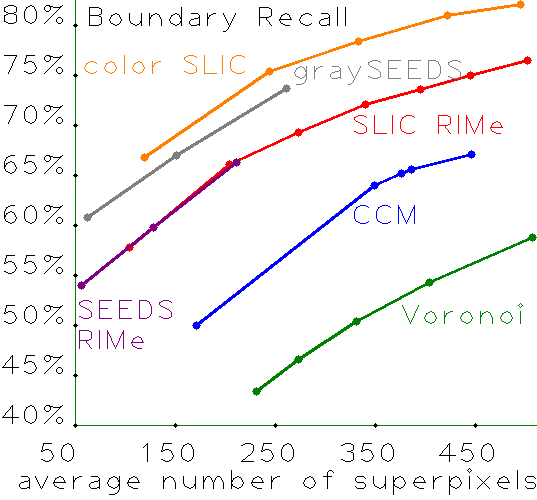}
\hspace*{1pt}
\includegraphics[height=55mm]{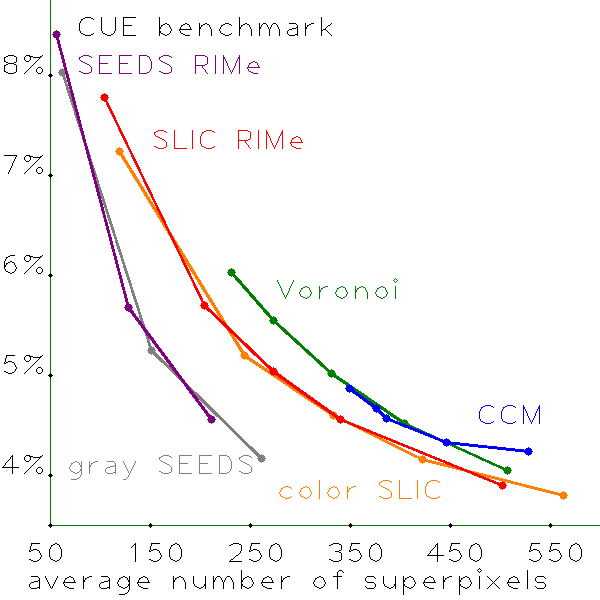}
\end{center}
\caption{Boundary Recall ($\uparrow$ better). Corrected Undersegmentation Error ($\downarrow$ better). }
\label{fig:BR+CUE}
\end{figure*}

\section{Experimental comparison of six algorithms on BSD}
\label{sec:experiments}


The {\em Berkeley Segmentation Database} BSD \cite{BSD} has 500 images widely used for evaluating segmentation algorithms due to (sometimes imperfect) human-sketched ground truth boundaries. 
For an image $I$, let $I=\cup G_j$ be a segmentation into ground truth regions and $I=\cup_{i=1}^k S_i$ be an oversegmentation into superpixels produced by an algorithm.
Each quality measure below compares the superpixels $S_1,\dots,S_k$ with the best suitable ground truth for every image from the BSD.
\smallskip

Let $G(I)=\cup G_j$ be the union of ground truth boundary pixels and $B(I)$ be the set of boundary pixels produced by a superpixel algorithm.
For a distance $\ep$ in pixels, the {\em Boundary Recall} $BR(\ep)$ is the ratio of ground truth boundary pixels $p\in G(I)$ within $2$ pixels from the superpixel boundary $B(I)$.
\smallskip

Van den Bergh et al. \cite{BBRG15} suggested the {\em Corrected Undersegmentation Error} 
$CUE = \dfrac{1}{k}\sum\limits_i  |S_i-G_{max}(S_i)|,$
where $G_{max}(S_i)$ is the ground truth region having the largest overlap with $S_i$.
The {\em Achievable Segmentation Accuracy} is $ASA=\dfrac{1}{k}\sum\limits_i \max\limits_j|S_i\cap G_j|.$
If a superpixel $S_i$ is covered by a ground truth region $G_j$, then $|S_i\cap G_j|=|S_i|$ is the maximum value. 
Otherwise $\max\limits_j|S_i\cap G_j|$ is the maximum area of $S_i$ covered by the most overlapping region $G_j$.
If we use superpixels for the higher level task of semantic segmentation, then $ASA$ is the upper bound on the number of pixels that are wrongly assigned to final semantic regions.
All values of BR, CUE, ASA are in [0,1] and can be measured in percents.

\begin{figure*}
\includegraphics[height=55mm]{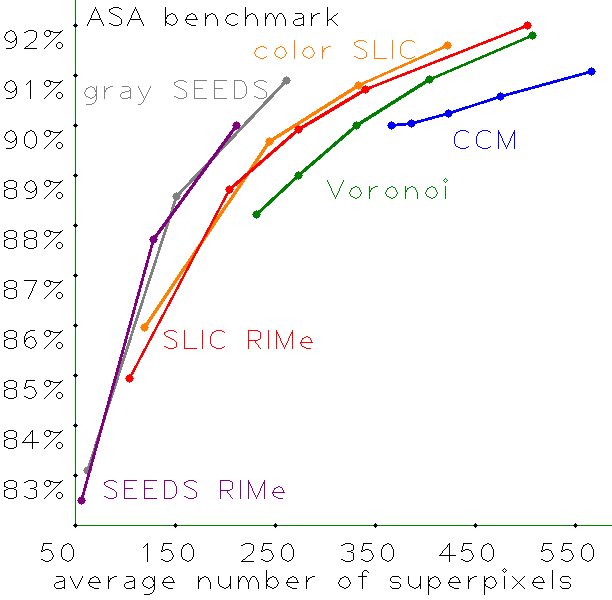}
\hspace*{1pt}
\includegraphics[height=55mm]{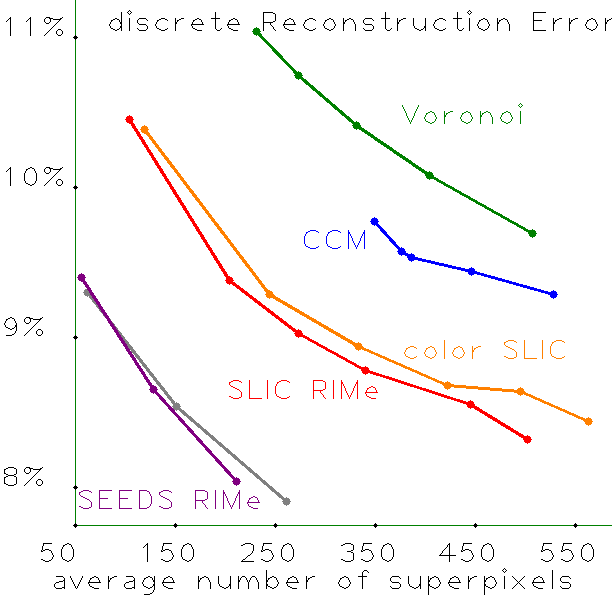}
\caption{
Achievable Segm. Accuracy ($\uparrow$ better).
Discrete Reconstruction Error ($\downarrow$ better).}
\label{fig:ASA+dRE}
\end{figure*}


The BSD benchmarks above involve some parameters, \textit{e.g.} the 2-pixel offset for the Boundary Recall, which can be hard to justify.
That is why several objective cost functions were proposed in an energy minimization framework.
The main energy term for CtF superpixels \cite{YBFU15} depends only on pixel intensities as follows and can be called the {\em discrete Reconstruction Error}:
$$dRE=\sum\limits_{\mbox{pixels } p} 
\Big|\Big|\mbox{Intensity}_p - \mbox{average intensity of }S(p) \Big|\Big|^2,\mbox { where}$$
$S(p)$ is the pixel-based superpixel containing a pixel $p$.
For colored images, the intensity can be considered as a vector of 3 colors with any (say, Euclidean) norm. 
So $dRE$ {\em objectively measures} how well the colored mesh (with average intensities over all superpixels) approximates the original image over all pixels.

\begin{figure}
\begin{center}
\includegraphics[height=55mm]{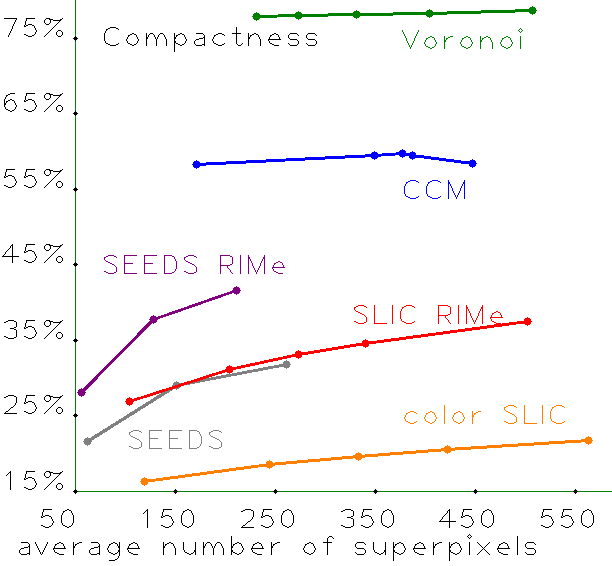}
\hspace*{1pt}
\includegraphics[height=55mm]{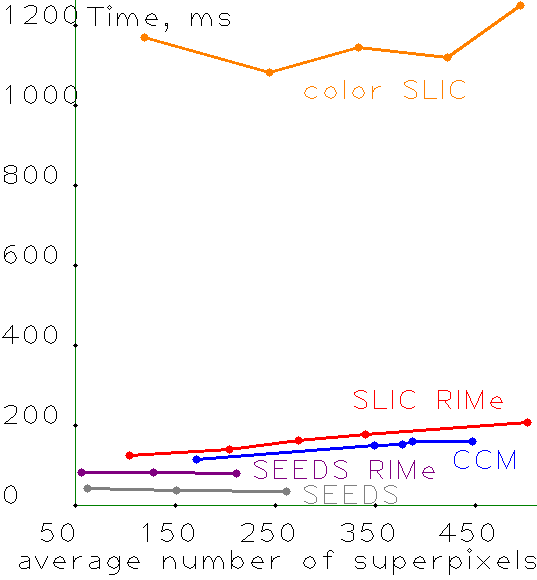}
\end{center}
\caption{Compactness ($\uparrow$ better). Time on 8G RAM 2.6 GHz Intel Core i5 ($\downarrow$ better). }
\label{fig:Comp+Time}
\end{figure}

Compactness measures for superpixels were used as regularizers by different authors \cite{VBM10,YBFU15}.
We propose the simplest version based on the isoperimetric quotient $Q(S)=\dfrac{4\pi\;\mathrm{area}(S)}{\mathrm{perimeter}^2(S)}$, which has the maximum value 1 for a round disk $S$.
The {\em Compactness} is the average $Comp=\sum\limits_{\mbox{superpixels S}}\dfrac{Q(S)}{\#\mbox{superpixels}}$.
\smallskip

Here are the parameters of the RIMe superpixels for benchmarking below. 
\smallskip

\noindent
$\bullet$
$\cof=30$ is used for straightening in Step~2.3 and controls approximation guarantees in Theorem~\ref{thm:guarantees}. 
\smallskip

\noindent
$\bullet$
$\mcd=2$ is the maximum (grayscale) intensity difference for merging adjacent polygons in Stage~3 (larger values will lead to larger superpixels).
\smallskip

\noindent
$\bullet$
$\man=30^{\circ}$ (can be 0) is the minimum angle between adjacent edges (only to avoid narrow triangles).
\medskip

Fig.~\ref{fig:BR+CUE}, \ref{fig:ASA+dRE}, \ref{fig:Comp+Time} show 6 benchmarks for 6 superpixel algorithms.
Each dot on the curves corresponds to a single run on 500 images and has the coordinates (average number of superpixels, average benchmark value over BSD).
\smallskip

The SLIC algorithm crucially uses 3 color values (converted to a Lab space) and we marked its curve as \textcolor{Orange}{color SLIC}.
We ran SEEDS on grayscale version of BSD images and marked the curve by  \textcolor{Gray}{gray SEEDS}, because both Voronoi and CCM algorithms accept only a grayscale input needed for LSDA edges.
The corresponding outputs of $\rime$ are marked by \textcolor{Red}{SLIC RIMe}, \textcolor{Purple}{SEEDS RIMe}.
\smallskip

The two remaining curves are for \textcolor{Green}{Voronoi} \cite{DL15} and \textcolor{Blue}{CCM} meshes \cite{FKF16}.
Fig.~ \ref{fig:223060} shows that RIMe conversions lead to visually better reconstructions than the only other resolution-independent superpixels on Voronoi and CCM meshes.

\section{Conclusions, applications and further problems}
\label{sec:conclusions}

Starting from any pixel-based superpixels, the RIMe conversion produces polygonal superpixels with almost the same BSD benchmarks BR, CUE, ASA and the objective error dRE in Figs.~\ref{fig:BR+CUE}, \ref{fig:ASA+dRE}.
In comparison with any pixel-based superpixels, polygonal superpixels have much fewer edges (no long zigzags in boundaries) with slopes of any potential direction (not only horizontal or vertical).
\smallskip

The RIMe superpixels have twice better compactness (more ``round") shapes than their original pixel-based superpixels such as SEEDS and SLIC in Fig.~\ref{fig:Comp+Time}. 
The RIMe conversions of SLIC, SEEDS outperform other polygonal resolution-independent superpixels (Voronoi and CCM) on BR, CUE, ASA, dRE.
\smallskip

When a resolution of the cameraman image is decreased as in the last figure of supplementary materials, pixel-based superpixels include more and more visible zigzags, while corresponding RIMe superpixels keep nice straight boundaries.
\smallskip

The key advantage of polygonal resolution-independent superpixels is the possibility to render a polygonal mesh at any higher resolution.
This up-scaling can convert low resolution photos into high-resolution images.
In Computer Graphics, polygonal meshes with few edges can be easily manipulated to improve animations converted from traditional videos by cheap cameras.
\smallskip

The next step is to optimize positions of branched vertices, for example by minimizing an energy containing the exact reconstruction error and compactness.

\imagefigure{223060}

{\small
\bibliographystyle{splncs04}
\bibliography{superpixel-meshes}
}

\end{document}